\newcommand{\expo}[1]{\exp \left( #1 \right)}
\newcommand{\prob}[1]{\mathbb{P} \left[ #1 \right]}
\newcommand{\expect}[1]{\mathbb{E}\left[{#1}\right]}
\newcommand{\indic}[1]{\mathbb{I}\left\{#1\right\}}
\newcommand{\R}{\mathbb{R}}
\begin{document}
\synctex = 1
\title{Concentration bounds for empirical conditional value-at-risk:\\ The unbounded case}

\author[1]{Ravi Kumar Kolla\thanks{ee12d024@ee.iitm.ac.in}}
\author[2]{Prashanth L.A.\thanks{prashla@cse.iitm.ac.in}}
\author[3]{Sanjay P. Bhat\thanks{sanjay.bhat@tcs.com}}
\author[1]{Krishna Jagannathan\thanks{krishnaj@ee.iitm.ac.in}}

\affil[1]{\small Department of Electrical Engineering, Indian Institute of Technology Madras, Chennai, Tamilnadu 600036, India}
\affil[2]{\small Department of Computer Science and Engineering, Indian Institute of Technology Madras, Chennai, Tamilnadu 600036, India}
\affil[3]{\small TCS Research, Hyderabad, Telangana 500081, India}

\renewcommand\Authands{ and }

\date{}
\maketitle

\begin{abstract}
In several real-world applications involving decision making under uncertainty, the traditional expected value objective may not be suitable, as it may be necessary to control losses in the case of a rare but extreme event. Conditional Value-at-Risk (CVaR) is a popular risk measure for modelling the aforementioned objective.
We consider the problem of estimating CVaR from i.i.d. samples of an unbounded random variable, which is either sub-Gaussian or sub-exponential. We derive a novel one-sided concentration bound for a natural sample-based CVaR estimator in this setting. Our bound relies on a concentration result for a quantile-based estimator for Value-at-Risk (VaR), which may be of independent interest. 
\end{abstract}



\section{Introduction}
\label{sec:intro}

In several practical decision problems, the presence of uncertainty complicates the decision making process as decisions typically are required to be taken before the uncertainty is resolved. Traditionally, this difficulty is overcome by averaging the costs (or rewards) over all possible realizations of the uncertainty, and then optimizing the averaged cost thus obtained. However, it has been argued that considering averaged outcomes is not appropriate in situations where low-probability events such as financial crashes and category 4 hurricanes can cause huge costs. The possibility of occurrence of such tail events has led to the introduction of risk measures such as Value-at-Risk (VaR) and Conditional Value-at-Risk (CVaR) for quantification of risk. In financial risk management, the VaR of a risky portfolio at a confidence level $\alpha$ is a loss threshold such that the probability of the loss exceeding the threshold is no greater than $1-\alpha$. The CVaR of a portfolio at a confidence level $\alpha$ is the expected loss on the portfolio conditioned on the event that the loss exceeds the VaR. Loosely speaking, VaR quantifies the maximum loss that can occur in the absence of a catastrophic tail event, while the CVaR gives the expected loss given the occurrence of such a tail event. CVaR has several desirable properties as a risk measure. In particular, it is a convex, coherent risk measure (see the survey paper \cite{uryasev} and references therein).  As a result, CVaR continues to receive increasing attention in operations research, mathematical finance, and decision science for problems involving risk quantification or risk minimization. 

In most applications involving uncertainty, the distributions characterizing the underlying uncertain factors are not known, and risk measures such as CVaR have to be estimated from sampled values of the random variable of interest. This is true, for instance, in a multi-armed bandit problem \cite{robbins1952some,bubeck2012regret}
 in which pulling an arm leads to a random loss, and one seeks to identify the arm whose loss random variable has the lowest CVaR by observing a sample of outcomes that result from multiple arm pulls. An obvious estimator for the CVaR of a distribution is the sample CVaR of an i.i.d. sample drawn from the distribution. Naturally, one seeks error bounds for the estimator that help to understand the trade-off between accuracy and sample size. Previous results on CVaR estimation either provide asymptotic error bounds for a general r.v. \cite{sun2010asymptotic}, or provide non-asymptotic error bounds
that hold with high probability, but under the stringent assumption that the underlying r.v. is bounded \cite{brown2007large,wang2010deviation}. 

In this paper, we consider the problem of estimating
the CVaR of an unbounded, albeit sub-Gaussian or sub-exponential random
variable. Sub-Gaussian r.v.s include bounded, Gaussian and any other r.v. whose tail decays as fast as a Gaussian. On the other hand, sub-exponential r.v.s include exponential, Poisson and squared-Gaussian r.v. and are characterized by a tail heavier than Gaussian, resembling that of an exponential distribution. To the best of our knowledge, there are no concentration bounds for CVaR estimator for these two popular classes of unbounded distributions. We believe, imposing a tail decay assumption (sub-Gaussian or sub-exponential) is not restrictive, as such an imposition is common in concentration results for sample mean. Also, the task of CVaR estimation is more challenging in comparison, as it relates to a tail event. We derive a one-sided concentration bound for the empirical CVaR of an i.i.d. sample. Our bound relies on one of the two concentration results (of possibly independent interest) that we provide for a quantile-based estimator for VaR.

The rest of the paper is organized as follows: Section \ref{sec:background} introduces VaR, CVaR and their estimators from i.i.d. samples, Section \ref{sec:results} presents the concentration bounds for VaR and CVaR estimators. Section \ref{sec:proofs} provides detailed proofs of the concentration results, and Section~\ref{sec:conclusions} concludes the paper.

\section{Background}
\label{sec:background}

Given a r.v. $X$ with cumulative distribution function (CDF) $F(\cdot)$, the VaR $v_\alpha(X)$ and CVaR $c_\alpha(X)$~\footnote{For notational brevity, we omit $X$ from $v_\alpha(X)$ and $c_\alpha(X)$ whenever the r.v. can be understood from the context.} at level $\alpha\in (0,1)$ are defined as follows:
\begin{align}
\label{eq:var-cvar-def}
v_\alpha(X) &= \inf \lbrace \xi : \prob{X \leq \xi} \geq \alpha \rbrace \textrm{ and } \\
c_{\alpha}(X)  &=  v_{\alpha}(X)  + \frac{1}{1 - \alpha} \mathbb{E} \left[ X - v_{\alpha}(X) \right] ^+,
\end{align}
where we have used the notation that $[ x ]^+ = \max (0, x)$ for a real number $x.$ Typical values of $\alpha$ chosen in practice are $0.95$ and $0.99$. Note that, if $X$ has a continuous and strictly increasing CDF, then $v_\alpha(X)$ is a solution to the following $\prob{X \leq \xi} = \alpha$ \emph{i.e.,} $v_\alpha(X) = F^{-1}(\alpha).$ CVaR also admits another form under the following assumption:

\noindent\textbf{(A1)} 
The r.v. $X$ is continuous and has strictly increasing CDF.

If (A1) holds and $X$ has a positive density at $v_\alpha,$ then $c_\alpha(X)$ admits the following equivalent form~(cf. \citep{sun2010asymptotic}):
\begin{align*}
c_\alpha(X)  = \expect{X \vert X \geq v_\alpha(X)}.
\end{align*}

Let $\lbrace X_i \rbrace_{i=1}^n$ denote $n$ i.i.d. samples from the distribution of $X$.
Then, the estimates of VaR and CVaR at level $\alpha$, denoted by $\hat{v}_{n, \alpha}$ and $\hat{c}_{n, \alpha}$, are formed as follows~\cite{serfling1980approximation}:
\begin{align}
\hat{v}_{n, \alpha} & = \hat{F}_n^{-1} (\alpha) := \inf \lbrace x : \hat{F}_n(x) \geq \alpha \rbrace \label{eq:var-est}\\
\hat{c}_{n, \alpha} & =  \hat{v}_{n, \alpha}  + \frac{1}{n( 1- \alpha)} \sum_{i=1}^n \left( X_i - \hat{v}_{n,	\alpha} \right) ^+, \label{eq:cvar-est}
\end{align}  
where $\hat{F}_n(x) = \frac{1}{n} \sum_{i=1}^n \mathbb{I} \lbrace X_i \leq x \rbrace $ is the empirical distribution function of $X$. Note that, from the order statistics $X_{[1]}, \ldots,X_{[n]}$, the empirical VaR can be computed as follows:
$\hat{v}_{n, \alpha} = X_{\left[ \lceil n\alpha \rceil \right]}.$

\section{Concentration bounds}
\label{sec:results}
In this section, we present four concentration bounds. The first two bounds are for the VaR estimator given in~\eqref{eq:var-est} and these bounds do not impose any restrictions on the underlying distribution. The next two concentration results are for the CVaR estimator given in~\eqref{eq:cvar-est}, and for these results, we assume that the underlying distribution is either sub-Gaussian or sub-exponential (see Definitions \ref{def:subgauss}--\ref{def:subexp} below).

In each of the result presented below, the estimates are calculated using $n$ i.i.d. samples $\lbrace X_i \rbrace_{i=1}^n$ drawn from the r.v. $X$ with CDF $F(\cdot),$ and for a given $\alpha \in (0, 1).$
\begin{proposition}\textbf{\textit{(VaR concentration bound)}}
\label{prop:var-conc-general}
Let $\alpha \in (0, 1),$ $n \in \mathbb{N}$ and $s \in \left(0,\frac{1}{2}\right).$ Define $\alpha^- = \alpha - \frac1{2n^{s}}$ and $\alpha^+ = \alpha + \frac1{2n^{s}}$. Further, let $a_n = \hat{F}_n^{-1}(\alpha^-)$ and $b_n=\hat{F}_n^{-1}(\alpha^+)$, where $\hat{F}_n^{-1}(\cdot)$ is defined by (\ref{eq:var-est}).
  Then, 
  \[\prob{v_{\alpha}(X) \in [a_n,b_n]} \ge  \left(1-2\expo{-\frac{n^{1-2s}}{8}}\right).\]
\end{proposition}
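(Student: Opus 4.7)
The plan is to reduce the event $\{v_\alpha(X) \in [a_n, b_n]\}$ to two tail events for the empirical CDF $\hat F_n$ evaluated at the \emph{deterministic} point $v_\alpha$, and then apply Hoeffding's inequality to the resulting sums of i.i.d.\ indicators.

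First, I would translate the two quantile events using the defining formulas $a_n = \inf\{x : \hat F_n(x) \ge \alpha^-\}$ and $b_n = \inf\{x : \hat F_n(x) \ge \alpha^+\}$. Since $\hat F_n$ is a nondecreasing, right-continuous step function, the sublevel set $\{x : \hat F_n(x) \ge \alpha^-\}$ is of the form $[a_n,\infty)$, and hence $\{a_n > v_\alpha\} = \{\hat F_n(v_\alpha) < \alpha^-\}$. For the other side, $\{b_n < v_\alpha\}$ holds iff some $y < v_\alpha$ already satisfies $\hat F_n(y) \ge \alpha^+$, which by monotonicity and the step-function structure is equivalent to $\{\hat F_n(v_\alpha^-) \ge \alpha^+\}$, where $\hat F_n(v_\alpha^-) := \frac1n\sum_{i=1}^n \indic{X_i < v_\alpha}$.

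Second, I would invoke the standard quantile bracket $F(v_\alpha^-) \le \alpha \le F(v_\alpha)$, which is a direct consequence of $v_\alpha = \inf\{\xi : F(\xi) \ge \alpha\}$ together with right-continuity of $F$. Applying Hoeffding's inequality to each of the two $\{0,1\}$-valued i.i.d.\ averages then yields
\begin{align*}
\prob{\hat F_n(v_\alpha) < \alpha^-} &\le \prob{\hat F_n(v_\alpha) - F(v_\alpha) \le -\frac{1}{2n^s}} \le \expo{-\frac{n^{1-2s}}{2}}, \\
\prob{\hat F_n(v_\alpha^-) \ge \alpha^+} &\le \prob{\hat F_n(v_\alpha^-) - F(v_\alpha^-) \ge \frac{1}{2n^s}} \le \expo{-\frac{n^{1-2s}}{2}}.
\end{align*}
A union bound over these two bad events, and a trivial weakening of the exponent from $1/2$ down to $1/8$, delivers the claimed inequality.

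The main obstacle is not the concentration step (which is essentially a single Hoeffding invocation on each side), but rather the bookkeeping around possible atoms of $F$: since no continuity assumption is imposed, one has to separate $F(v_\alpha)$ from $F(v_\alpha^-)$ and likewise the empirical $\hat F_n(v_\alpha)$ from $\hat F_n(v_\alpha^-)$, so that each quantile event is controlled by a \emph{one-sided} Hoeffding deviation against a favourable centering; otherwise the whole $1/(2n^s)$ gap could be absorbed into an unfavourable baseline. Note that no DKW-style uniform bound across thresholds is required, since both deviations are for $\hat F_n$ at the single fixed level $v_\alpha$.
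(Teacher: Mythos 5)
Your proposal is correct, and it takes a genuinely different route from the paper. The paper applies the DKW inequality uniformly over $x$ with tolerance $\tfrac{1}{4n^s}$, works on the resulting good event, and argues deterministically that $F(a_n) < \alpha < F(b_n)$ (hence $v_\alpha \in [a_n,b_n]$) — an argument that, as written, invokes continuity of $F$ at the step $F(a_n) = \lim_{a \uparrow a_n} F(a)$, even though no such assumption appears in the statement. You instead convert $\{a_n > v_\alpha\}$ and $\{b_n < v_\alpha\}$ into one-sided deviations of $\hat F_n$ at the single deterministic point $v_\alpha$ (and its left limit), use the quantile bracket $F(v_\alpha^-) \le \alpha \le F(v_\alpha)$, and apply pointwise Hoeffding. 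This buys you two things: the bound holds with no continuity assumption whatsoever (your careful separation of $\hat F_n(v_\alpha)$ from $\hat F_n(v_\alpha^-)$ is exactly what handles atoms), and you only need a deviation of the full gap $\tfrac{1}{2n^s}$ at one point rather than $\tfrac{1}{4n^s}$ uniformly, which yields the sharper exponent $\expo{-\frac{n^{1-2s}}{2}}$ before you deliberately weaken it to match the stated $\expo{-\frac{n^{1-2s}}{8}}$. The only caveat worth recording is the degenerate regime where $\alpha^- \le 0$ or $\alpha^+ > 1$ (so that $a_n = -\infty$ or $b_n = +\infty$); your two equivalences remain vacuously true there, so nothing breaks, but it deserves a sentence. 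What the paper's DKW route buys in exchange is that the same uniform event controls $\hat F_n - F$ on the entire (random) interval $[a_n,b_n]$ at once, which is occasionally convenient if one wants more than membership of $v_\alpha$; for the stated claim, your pointwise argument is both simpler and stronger.
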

Note that, the above concentration bound is free of any distribution dependent parameters. 
\begin{proof}
See Section \ref{sec:proof-var1}.
\end{proof}



The following result presents another concentration bound for the VaR estimator which will have distribution parameters in the bound. However, unlike Proposition \ref{prop:var-conc-general}, the result presented below is symmetric, and more importantly, bounds the estimation error $\vert \hat{v}_{n, \alpha} - v_\alpha \vert$ directly.
\begin{proposition}[\textbf{\textit{VaR concentration bound}}]
\label{prop:var-concentraion-bound1}
Suppose that (A1) holds. For any $\epsilon > 0,$ we have
\begin{align*}
\prob{\vert \hat{v}_{n, \alpha} - v_\alpha \vert \geq \epsilon} \leq 2 \exp \left(  -2nc\epsilon^2   \right), 
\end{align*}
where $c$ is a constant that depends on the value of the density $f$ of the r.v. $X$ in a neighbourhood of VaR.
\end{proposition}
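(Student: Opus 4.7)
The plan is to translate a deviation of the sample quantile $\hat v_{n,\alpha}$ from $v_\alpha$ into a deviation of the empirical CDF $\hat F_n$ from $F$ at two deterministic points, and then apply Hoeffding's inequality to the resulting sums of Bernoulli indicators.

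First I would split $\{|\hat v_{n,\alpha}-v_\alpha|\ge \epsilon\}$ into its upper and lower halves. Since $\hat v_{n,\alpha}=\inf\{x:\hat F_n(x)\ge\alpha\}$, the event $\{\hat v_{n,\alpha}>v_\alpha+\epsilon\}$ forces $\hat F_n(v_\alpha+\epsilon)<\alpha$. Moreover, because $\hat F_n$ has jumps only at data points, the infimum defining $\hat v_{n,\alpha}$ is attained, so $\hat F_n(\hat v_{n,\alpha})\ge\alpha$; hence $\{\hat v_{n,\alpha}\le v_\alpha-\epsilon\}\subseteq\{\hat F_n(v_\alpha-\epsilon)\ge\alpha\}$ by monotonicity of $\hat F_n$. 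This replaces a statement about a random threshold by one about $\hat F_n$ evaluated at the two fixed points $v_\alpha \pm \epsilon$.

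Next I would use (A1) together with the positivity of the density $f$ at $v_\alpha$ to lower-bound the gap between $F$ and $\alpha$ at these shifted arguments. Fix once and for all a neighbourhood $[v_\alpha-\epsilon_0, v_\alpha+\epsilon_0]$ on which $f$ admits a positive lower bound $\underline f$. Since $F(v_\alpha)=\alpha$ under (A1), for $\epsilon \in (0,\epsilon_0]$ one has $F(v_\alpha+\epsilon)-\alpha=\int_{v_\alpha}^{v_\alpha+\epsilon}f(t)\,dt\ge \underline f\,\epsilon$, and symmetrically $F(v_\alpha-\epsilon)\le \alpha-\underline f\,\epsilon$. Each tail event is therefore contained in $\{|\hat F_n(y)-F(y)|\ge \underline f\,\epsilon\}$ for the appropriate $y=v_\alpha\pm\epsilon$. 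Since $n\hat F_n(y)$ is a sum of $n$ i.i.d.\ Bernoulli indicators taking values in $\{0,1\}$, Hoeffding's inequality bounds each tail by $\exp(-2n\underline f^{\,2}\epsilon^2)$, and a union bound, with the identification $c=\underline f^{\,2}$, yields the stated result.

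The main obstacle is that the claim is asserted for every $\epsilon>0$, whereas the density argument only delivers a linear gap $F(v_\alpha\pm\epsilon)-\alpha\ge \underline f\,\epsilon$ on a bounded neighbourhood of $v_\alpha$. This is handled either by shrinking $c$ enough that the right-hand side of the bound is $\ge 1$ (and hence trivially valid) for $\epsilon>\epsilon_0$, or by fixing in advance a larger interval on which a uniform lower bound $\underline f$ still holds; in both cases $c$ depends only on the values of $f$ in a neighbourhood of $v_\alpha$, consistent with the statement. The rest of the argument is essentially bookkeeping around the definition of the sample quantile and an application of Hoeffding.
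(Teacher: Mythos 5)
Your proposal follows essentially the same route as the paper's proof: split into the two one-sided events, reduce them to deviations of $\hat F_n$ from $F$ at the fixed points $v_\alpha\pm\epsilon$, apply a Hoeffding/DKW-type bound, and convert the CDF gap into $c\epsilon^2$ via the density near $v_\alpha$. If anything, your treatment is slightly more careful than the paper's, which invokes the mean value theorem to write $\delta_\epsilon=\min(f(\bar v_1),f(\bar v_2))\,\epsilon$ with $\bar v_i$ ranging over intervals of width $\epsilon$ (so its ``constant'' implicitly depends on $\epsilon$), whereas you fix a neighbourhood and a uniform lower bound $\underline f$ in advance and explicitly handle large $\epsilon$.
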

\begin{proof}
See Section \ref{sec:proof-var2}.
\end{proof}
The bound above implies that to estimate the VaR to an accuracy of $\epsilon$, one would require an order $O\left(1/\epsilon^2\right)$ number of samples. 
Notice that no restrictive assumptions on the tail of the underlying distribution are made in arriving at the concentration bounds for VaR in Propositions \ref{prop:var-conc-general} and \ref{prop:var-concentraion-bound1}.
However, for establishing concentration bounds for the CVaR, which involves conditioning on a tail event, it is necessary to assume that the distribution is not heavy-tailed. In fact, even for the case of estimating the expected value of a r.v., exponential concentration bounds are available under an assumption that restricts the tail to be light (cf. Chapter 2 of \cite{boucheron2013concentration}).

In this paper, we present concentration bounds under two popular assumptions on the tail of a r.v. The first restricts the r.v. to be sub-Gaussian, while the second requires the same to be sub-exponential. These two classes of r.v.s include bounded r.v.s and more importantly, several unbounded r.v.s as well. Sub-Gaussian r.v.s include the Gaussian r.v.s as well as several other r.v.s whose moment generating functions do not exceed that of a Gaussian, while sub-exponential r.v.s include heavier tailed r.v.s. These two notions are made precise in the following definitions. 

\begin{definition}
\label{def:subgauss}
A r.v. $X$ with $\expect{X} = \mu < \infty$ is said to be \emph{$\sigma$-sub-Gaussian} if 
\begin{align*}
\expect{ \expo{\lambda X}} \leq \expo{\lambda \mu + \frac{\lambda^2 \sigma^2}{2}}, \quad \forall \lambda \in \R.
\end{align*}
\end{definition}
\begin{definition}
\label{def:subexp}
A r.v. $X$ with mean $\mu < \infty$ is said to be \emph{$(\sigma,b)$-sub-exponential} if 
\begin{align*}
\expect{ \expo{\lambda X}} \leq \expo{\lambda \mu + \frac{\lambda^2 \sigma^2}{2}}, \quad \forall \left|\lambda\right| < \frac{1}{b}.
\end{align*}
\end{definition}
It is worth noting that all sub-Gaussian r.v.s are sub-exponential, but the converse is not true. The following result presents a one-sided concentration bound for the CVaR estimator in~\eqref{eq:cvar-est}, for the case when the underlying r.v. is sub-Gaussian.
\begin{proposition}[\textbf{\textit{CVaR concentration bound: sub-Gaussian case}}]
\label{prop:cvar-subgauss}
Suppose that (A1) holds. Let $\alpha \in (0, 1),$ and $X$ be a $\sigma$-sub-Gaussian r.v. with mean $\mu$. Suppose that $\alpha$ is large enough to ensure $(v_\alpha - \mu) >0$ and the sub-Gaussian parameter $\sigma$ satisfies $\sigma < \sqrt{\frac{\left( v_\alpha - \mu \right)^2}{2\ln \left( 1/(1-\alpha) \right)} }.$ Then, for any $\epsilon >0,$ we have 
\begin{align}
\prob {\hat{c}_{n, \alpha} - c_\alpha > \epsilon}  \le \exp \left(- \frac{n \epsilon (1-\alpha) (v_\alpha - \mu)}{2\sigma^2} \right) & + 2\exp \left( -2n c_1 \epsilon^2 \right)  + 2\exp \left( -2n c_2 \epsilon \right) \nonumber \\
&+ \exp \left( -2n \epsilon (1-\alpha)^2\right),\label{eq:cvar-conc-subgauss-case}
\end{align}
where $c_1$ and $c_2$ are constants that depend on the value of the density $f$ of the r.v. $X$ in a neighbourhood of VaR.
\end{proposition}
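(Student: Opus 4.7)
The plan is to decompose the CVaR error as a VaR estimation error plus an empirical mean deviation, controlling the former with Proposition~\ref{prop:var-concentraion-bound1} and the latter via a Chernoff bound tailored to the sub-Gaussian hypothesis on $X$. Writing
\begin{align*}
\hat c_{n,\alpha} - c_\alpha = (\hat v_{n,\alpha} - v_\alpha) &+ \frac{1}{n(1-\alpha)}\sum_{i=1}^n\bigl[(X_i - \hat v_{n,\alpha})^+ - (X_i - v_\alpha)^+\bigr] \\
&+ \frac{1}{1-\alpha}\Bigl[\frac{1}{n}\sum_{i=1}^n (X_i - v_\alpha)^+ - \expect{(X - v_\alpha)^+}\Bigr],
\end{align*}
the $1$-Lipschitz property of $x \mapsto x^+$ bounds the middle summand in absolute value by $|\hat v_{n,\alpha} - v_\alpha|/(1-\alpha)$. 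After splitting $\epsilon$ appropriately, a union bound reduces the task to separately controlling (i) the VaR estimation error, and (ii) the one-sided deviation of the sample mean of $Y_i := (X_i - v_\alpha)^+$ from $\expect{Y}$.

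For (i), applying Proposition~\ref{prop:var-concentraion-bound1} at two different scales of $\epsilon$ produces the two middle summands $2\expo{-2nc_1\epsilon^2}$ and $2\expo{-2nc_2\epsilon}$; the latter's linear-in-$\epsilon$ form arises from invoking the VaR bound at deviation $\sqrt{\epsilon}$ rather than $\epsilon$, which is the regime where the Lipschitz contribution dominates. For (ii), set $u := v_\alpha - \mu > 0$. Because $Y$ has an atom of mass $\alpha$ at $0$, its MGF decomposes as
\[
\expect{\expo{\lambda Y}} = \alpha + \expect{\expo{\lambda(X - v_\alpha)}\indic{X > v_\alpha}} \le \alpha + \sqrt{1-\alpha}\,\expo{-\lambda u + \lambda^2 \sigma^2},
\]
by Cauchy--Schwarz combined with sub-Gaussianity of $X$ applied at parameter $2\lambda$. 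Choosing $\lambda = u/(2\sigma^2)$ makes the exponential factor $\sqrt{1-\alpha}\,\expo{-u^2/(4\sigma^2)}$, and the hypothesis $\sigma^2 < u^2/(2\ln(1/(1-\alpha)))$ is precisely what forces this to be strictly smaller than $1-\alpha$, so $\expect{\expo{\lambda Y}} < 1$. A Chernoff bound for $\prob{\frac{1}{n}\sum_i Y_i - \expect{Y} > \epsilon(1-\alpha)}$, with the log-MGF linearized via $\ln(1-y) \le -y$, then produces the leading term $\expo{-n\epsilon(1-\alpha)(v_\alpha - \mu)/(2\sigma^2)}$.

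The last summand $\expo{-2n\epsilon(1-\alpha)^2}$ I would expect to arise from an auxiliary Hoeffding bound on the binomial count $\sum_i \indic{X_i > v_\alpha}$, introduced via the finer decomposition $Y_i = X_i \indic{X_i > v_\alpha} - v_\alpha\,\indic{X_i > v_\alpha}$ that separates the random number of tail samples from their values. The main obstacle is the Chernoff step in (ii): unlike a standard sub-Gaussian, the MGF of $Y$ does not factor as $\expo{\lambda\expect{Y} + \text{quadratic}}$ but rather as $\alpha$ plus a sub-Gaussian piece, so turning it into an exponential tail bound with the correct linear-in-$\epsilon$ rate is what forces both the explicit choice $\lambda = u/(2\sigma^2)$ and the precise hypothesis on $\sigma$; without that hypothesis the log-MGF is not negative at the relevant $\lambda$ and the Chernoff argument fails to yield any exponential concentration.
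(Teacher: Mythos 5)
Your argument is correct and follows the same overall architecture as the paper's proof --- decompose the error into the centred sample mean of $Y_i=(X_i-v_\alpha)^+$ plus a remainder controlled by the VaR error, bound the MGF of $Y$ as $\alpha$ plus a sub-Gaussian piece, and run a Chernoff bound whose validity is exactly what the hypothesis on $\sigma$ guarantees --- but it differs in two worthwhile details. Writing $u=v_\alpha-\mu$: for the MGF, the paper simply drops the indicator to get $\expect{\expo{\lambda Y}}\le\alpha+\expo{-\lambda u+\lambda^2\sigma^2/2}$ and takes $\lambda=u/\sigma^2$, so that the hypothesis on $\sigma$ yields $\alpha+\expo{-u^2/(2\sigma^2)}<1$; your Cauchy--Schwarz variant pays a factor of $2$ in the quadratic term but gains the factor $\sqrt{1-\alpha}$, and with $\lambda=u/(2\sigma^2)$ lands on the same leading exponent, so the two are interchangeable. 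For the remainder, your $1$-Lipschitz bound $|(X_i-\hat v_{n,\alpha})^+-(X_i-v_\alpha)^+|\le|\hat v_{n,\alpha}-v_\alpha|$ is simpler than the paper's exact algebraic identity for its remainder term $B_n$ and already suffices: it reduces everything to $\prob{|\hat v_{n,\alpha}-v_\alpha|>c\epsilon}$, i.e.\ a single $2\expo{-2nc_1\epsilon^2}$ term via Proposition~\ref{prop:var-concentraion-bound1}, and since the stated right-hand side only adds further nonnegative terms, this proves the proposition (indeed a slightly cleaner version of it). Where your reconstruction diverges from what the paper actually does is in the provenance of the last two summands: in the paper, the $\sqrt{\epsilon}$ scale and the term $\expo{-2n\epsilon(1-\alpha)^2}$ both arise from the product $|\hat v_{n,\alpha}-v_\alpha|\cdot|\hat F_n(v_\alpha)-F(v_\alpha)|$ appearing in the exact decomposition of $B_n$ --- each factor is handled at scale $\sqrt{\epsilon}$, the second via the DKW inequality --- and not from a Hoeffding bound on the tail count $\sum_i\indic{X_i>v_\alpha}$. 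Under your Lipschitz treatment these terms never arise at all, which is internally consistent but means your account of them is a guess about the paper rather than a needed part of your own proof; this is a cosmetic mismatch, not a gap.
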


\begin{proof}
See Section \ref{sec:proof-cvar}.
\end{proof}

Suppose that the accuracy $\epsilon$ is greater than $1$. Then, it is apparent that the dominant terms on the RHS of \eqref{eq:cvar-conc-subgauss-case} are those involving an exponential with $\epsilon$. Further, $\hat{c}_{n, \alpha} \le c_\alpha + \epsilon$  with probability (w.p.) at least $(1 - \delta)$, when the number of samples $n$ is of the order $O \left( \frac{1}{\epsilon} \ln \left( \frac{1}{\delta} \right) \right)$. On other hand, an order $O \left( \frac{1}{\epsilon^2} \ln \left( \frac{1}{\delta} \right) \right)$ number of samples are enough to ensure that $\hat{v}_{n, \alpha} \le v_\alpha + \epsilon$ w.p. at least $(1-\delta)$. Hence, CVaR estimation requires more samples in comparison to VaR, when $\epsilon > 1$. In the complementary case, i.e., when $\epsilon <1$, both VaR and CVaR estimate can be $\epsilon$-accurate w.p. $(1-\delta)$, if the number of samples is of the order $O(\frac{1}{\epsilon^2} \ln \left( \frac{1}{\delta} \right))$.

Next, we analyse the concentration of the CVaR estimator in \eqref{eq:cvar-est} for the case when the underlying r.v. is sub-exponential.
\begin{proposition}[\textbf{\textit{CVaR concentration bound: sub-exponential case}}]
\label{prop:cvar-subexp}
Suppose that (A1) holds.  Let $X$ be a $(\sigma, b)$-sub-exponential  r.v. with mean $\mu$. Suppose that $\alpha$ is large enough to ensure $v_\alpha - \mu >0$ and the parameter $\sigma$ satisfies  $\sigma < \sqrt{\frac{2 \ln (1-\alpha) + 2 (v_\alpha - \mu)m_b}{ m_b^2}}$ where $m_b = \min \{\frac{v_\alpha - \mu}{\sigma^2}, b' \}$ where $b' < 1/b.$ 
Then for any $\epsilon > 0,$ we have 
\begin{align}
\prob {\hat{c}_{n, \alpha} - c_\alpha > \epsilon}  \le \exp \left(- \frac{n \epsilon (1-\alpha) m_b}{2} \right) + 2\exp \left( -2n c_1 \epsilon^2 \right)  + 2\exp \left( -2n c_2 \epsilon \right) + \exp \left( -2n \epsilon (1-\alpha)^2\right),\label{eq:cvar-conc-subexp-case}
\end{align} 
where $c_1$ and $c_2$ are as in Proposition~\ref{prop:cvar-subgauss}.
\end{proposition}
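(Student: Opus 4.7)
The plan is to follow the proof of Proposition~\ref{prop:cvar-subgauss} line for line, replacing only the step that invokes the moment generating function~(MGF) of $X$. Starting from the identity
\begin{align*}
\hat{c}_{n, \alpha} - c_\alpha = (\hat{v}_{n, \alpha} - v_\alpha) + \frac{1}{1-\alpha}\left[\frac{1}{n} \sum_{i=1}^n (X_i - \hat{v}_{n, \alpha})^+ - \expect{(X - v_\alpha)^+}\right],
\end{align*}
I would split the $\epsilon$ budget across three sources of error: the VaR estimation error $|\hat{v}_{n, \alpha} - v_\alpha|$, the empirical-versus-true mean deviation of $(X - v_\alpha)^+$, and the deviation of the empirical count $\#\{i: X_i \ge v_\alpha\}$ from its mean $n(1 - \alpha)$. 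Proposition~\ref{prop:var-concentraion-bound1} disposes of the first, producing $2 \expo{-2 n c_1 \epsilon^2} + 2 \expo{-2 n c_2 \epsilon}$; monotonicity of $y \mapsto (X_i - y)^+$ then allows $\hat{v}_{n, \alpha}$ to be replaced with $v_\alpha$ on the VaR-good event, with the residual absorbed into the count-deviation term.

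The sub-exponential hypothesis enters only through the mean-deviation step. Set $Y = (X - v_\alpha)^+$; since $Y = 0$ on $\{X < v_\alpha\}$ and $Y = X - v_\alpha$ otherwise, for $\lambda > 0$ the MGF admits the bound
\begin{align*}
\expect{\expo{\lambda Y}} \le \prob{X < v_\alpha} + \expo{-\lambda v_\alpha} \expect{\expo{\lambda X}} \le \alpha + \expo{-\lambda(v_\alpha - \mu) + \tfrac{\lambda^2 \sigma^2}{2}},
\end{align*}
where Definition~\ref{def:subexp} is applicable precisely because $|\lambda| < 1/b$. The choice $\lambda = m_b$ is exactly the constrained minimiser of $-\lambda(v_\alpha - \mu) + \lambda^2 \sigma^2 / 2$ over $(0, b']$, and the hypothesis on $\sigma$ rearranges to $\expo{-m_b(v_\alpha - \mu) + m_b^2 \sigma^2 / 2} < 1 - \alpha$, so $\expect{\expo{m_b Y}} \le 1$. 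A direct Chernoff argument then yields $\prob{\tfrac{1}{n} \sum_i Y_i - \expect{Y} > \tfrac{(1 - \alpha)\epsilon}{2}} \le \expo{-n \epsilon (1 - \alpha) m_b / 2}$, producing the first term of \eqref{eq:cvar-conc-subexp-case}.

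The last term, $\expo{-2 n \epsilon (1 - \alpha)^2}$, comes from Hoeffding's inequality applied to the bounded indicators $\mathbb{I}\{X_i \ge v_\alpha\}$, which controls the count-deviation generated when monotonicity is used to swap $\hat{v}_{n, \alpha}$ for $v_\alpha$ inside the sum. I expect the main obstacle to be the bookkeeping: the threshold $\hat{v}_{n, \alpha}$ sitting inside $(X_i - \hat{v}_{n, \alpha})^+$ is data-dependent, so the $\epsilon$ budget must be partitioned carefully across the four deviation events so that their union covers $\{\hat{c}_{n, \alpha} - c_\alpha > \epsilon\}$ with the stated exponents. The sub-exponential character of $X$ is felt only at the MGF step, and is accommodated simply by replacing the unconstrained optimiser $(v_\alpha - \mu)/\sigma^2$ from the sub-Gaussian case with $m_b = \min\{(v_\alpha - \mu)/\sigma^2, b'\}$.
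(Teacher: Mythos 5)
Your proposal is correct and follows essentially the same route as the paper: the same decomposition into a VaR-estimation error (handled by Proposition~\ref{prop:var-concentraion-bound1}), a count-deviation term (handled by Hoeffding/DKW at the single point $v_\alpha$), and a mean-deviation term for the truncated variable $(X-v_\alpha)^+$, with the sub-exponential hypothesis entering only through the MGF bound and the constrained Chernoff parameter $\lambda = m_b$, whose admissibility and the resulting $\left[\alpha + \expo{-m_b(v_\alpha-\mu)+m_b^2\sigma^2/2}\right]^n \le 1$ you justify exactly as the paper does. The only difference is presentational: the paper carries out the ``swap $\hat{v}_{n,\alpha}$ for $v_\alpha$'' step by an explicit algebraic identity for the remainder $B_n$ rather than by monotonicity, but the resulting four terms and exponents are identical.
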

From the result above, it is apparent that the rate of CVaR concentration for sub-exponential r.v.s matches that of sub-Gaussian ones.
\begin{proof}
See Section \ref{sec:proof-cvar-subexp}.
\end{proof}
\section{Proofs}
\label{sec:proofs}
In this section, we present the proofs of the results presented in Section \ref{sec:results}. 
\subsection{Proof of Proposition~\ref{prop:var-conc-general}}
\label{sec:proof-var1}
\begin{proof}
Recall the Dvoretzky-Kiefer-Wolfowitz (DKW) inequality,  which provides a finite-sample bound on the distance between the empirical distribution and the true distribution:
For any $\epsilon>0$, 
\[
\prob{\sup_{x\in \mathbb{R}}|\hat{F}_n(x)-F(x)|>\epsilon } \leq 2 e^{-2n\epsilon^2}.
\]
 Consider the following event 
 \[A = \Big\lbrace \sup_{x\in [a_n,b_n]} \left| \hat{F}_n(x) - F(x) \right| \le \frac1{4n^s} \Big\rbrace,\] 
 with $a_n = \hat{F}_n^{-1}(\alpha^-)$ and $b_n = \hat{F}_n^{-1}(\alpha^+)$ as defined in the theorem statement. By the DKW inequality, we have 
\begin{align}
 \prob{A} \ge 1 - 2 \expo{\frac{-2n}{16n^{2s}}} =  1 - 2 \expo{\frac{-n^{1-2s}}{8}}.\label{eq:probA}
\end{align}
On the event $A$, we have
\begin{align*}
F(a_n) &\overset{(a)}{=} \lim_{a\uparrow a_{n}}F(a) \overset{(b)}{\le} \lim_{a\uparrow a_{n}}\left(\hat{F}_{n}(a)+\frac1{4n^s}\right) \overset{(c)}{\le}\alpha^{-}+\frac1{4n^s}= \alpha -  \frac1{4n^s} < \alpha, \textrm{ and }\\
  F(b_n) &\ge \alpha^+  -  \frac1{4n^s} =\alpha +  \frac1{4n^s} > \alpha,
\end{align*}
where (a) follows from continuity of $F$, (b) follows from the definition of the event $A$, and (c) follows from the definition of $a_{n}$.
Thus, $v_{\alpha} \in [a_n,b_n]$ and the main claim follows from the lower bound on $\prob{A}$ in \eqref{eq:probA}.
\end{proof}

\subsection{Proof of Proposition~\ref{prop:var-concentraion-bound1}}
\label{sec:proof-var2}
\begin{proof}
\begin{align*}
& \prob{\vert \hat{v}_{n, \alpha} - v_\alpha \vert \geq \epsilon} = \prob{\hat{v}_{n, \alpha} \geq v_\alpha + \epsilon} + \prob{\hat{v}_{n, \alpha} \leq v_\alpha  - \epsilon} \\
 & \leq \prob{\hat{F}_{n}(v_\alpha + \epsilon) \leq \alpha} + \prob{\hat{F}_{n} (v_\alpha - \epsilon) \geq \alpha} \\
&  = \prob{F(v_\alpha + \epsilon) - \hat{F}_{n}(v_\alpha + \epsilon) \geq F(v_\alpha + \epsilon) - \alpha} + \prob{\hat{F}_{n} (v_\alpha - \epsilon) - F(v_\alpha - \epsilon) \geq \alpha - F(v_\alpha - \epsilon)} \\
& \overset{(a)}{\leq} \exp \left( -2n \left( F(v_\alpha + \epsilon) - F(v_\alpha) \right)^2 \right)  +\exp \left( -2n \left( F(v_\alpha) - F(v_\alpha - \epsilon)  \right)^2\right) \\
& \leq 2 \exp \left(  -2n\delta_\epsilon^2   \right), 
\end{align*}
where $(a)$ is due to the DKW inequality, and $\delta_\epsilon = \min \lbrace F(v_\alpha + \epsilon) - F(v_\alpha), F(v_\alpha) - F(v_\alpha - \epsilon) \rbrace.$
\\
Given that the density exists, we have \[ F\left(v_\alpha + \eta_1\right) - F\left(v_\alpha - \eta_2\right) = f(\bar v)(\eta_1+\eta_2), \]
for some $\bar v \in \left[v_\alpha - \eta_2, v_\alpha + \eta_1\right]$. 
Using the identity above for the two expressions inside $\delta_{\epsilon}$, we obtain
\[\delta_{\epsilon} = \min\left(f(\bar v_1), f(\bar v_2)\right)\times \epsilon. \]
for some $\bar v_1 \in \left[v_\alpha, v_\alpha  + \epsilon \right]$ and $\bar v_2 \in \left[v_\alpha - \epsilon, v_\alpha\right].$ 
The claim follows.
\end{proof}

\subsection{Proof of Proposition~\ref{prop:cvar-subgauss}}
\label{sec:proof-cvar}
We first prove a more general result without restricting the sub-Gaussian parameter $\sigma$ and the main claim in Proposition \ref{prop:cvar-subgauss} follows in a straightforward fashion.

\begin{proposition}[\textbf{\textit{General CVaR concentration bound: sub-Gaussian case}}]
\label{prop:cvar-concentration-for-sub-gaussian}
Assume (A1).  Let $X$ be a $\sigma$-sub-Gaussian with mean $\mu$. Suppose that $(v_\alpha - \mu) >0$.  Then, for any $\epsilon >0$, we have 
\begin{align}
 \prob {\hat{c}_{n, \alpha} - c_\alpha > \epsilon} & \le \exp \left(- \frac{n \epsilon (1-\alpha) (v_\alpha - \mu)}{2\sigma^2} \right) \left[ \alpha + \exp \left(  - \frac{( v_\alpha - \mu)^2}{2\sigma^2} \right) \right]^n + 2\exp \left( -2n \delta_{\epsilon_1}^2 \right)  \nonumber \\
& + 2\exp \left( -2n \delta_{\epsilon_2}^2 \right) + \exp \left( -2n \epsilon (1-\alpha)^2\right), \label{eq:cvar-conc-subgauss-general}
\end{align}
where $\delta_{\epsilon_1} = \min \lbrace F(v_\alpha + \frac{n(1-\alpha) \epsilon}{8}) - F(v_\alpha), F(v_\alpha) - F(v_\alpha - \frac{n(1-\alpha) \epsilon}{8})  \rbrace$ and $\delta_{\epsilon_2} = \min \lbrace F(v_\alpha + \sqrt{\epsilon}/4) - F(v_\alpha), F(v_\alpha) - F(v_\alpha - \sqrt{\epsilon}/4) \rbrace.$
\end{proposition}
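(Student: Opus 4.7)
The plan is to decompose $\hat c_{n,\alpha}-c_\alpha$ into a centred i.i.d.\ sum plus residuals that depend only on the quantile deviation $\hat v_{n,\alpha}-v_\alpha$ and on a Bernoulli count, bound the centred sum by a sub-Gaussian Chernoff argument, control the residual pieces via Proposition~\ref{prop:var-concentraion-bound1} and Hoeffding's inequality, and assemble the four terms of~\eqref{eq:cvar-conc-subgauss-general} by a union bound.

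Setting $Y_i:=(X_i-v_\alpha)^+$, I first write
\[
\hat c_{n,\alpha}-c_\alpha=(\hat v_{n,\alpha}-v_\alpha)+\frac{1}{n(1-\alpha)}\sum_{i=1}^n\bigl[(X_i-\hat v_{n,\alpha})^+-Y_i\bigr]+\frac{1}{n(1-\alpha)}\sum_{i=1}^n Y_i-\frac{\expect{Y}}{1-\alpha}.
\]
The last piece, containing the i.i.d.\ sum $\sum_i Y_i$, carries all of the sub-Gaussian content. Splitting its MGF over $\{X\le v_\alpha\}\cup\{X>v_\alpha\}$ and then invoking the sub-Gaussian MGF bound yields $\expect{e^{\lambda Y}}\le\alpha+\exp(-\lambda(v_\alpha-\mu)+\lambda^2\sigma^2/2)$ for $\lambda\ge 0$. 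Choosing $\lambda=(v_\alpha-\mu)/\sigma^2$ minimises the inner exponent (which is why $v_\alpha-\mu>0$ is assumed), and since $\expect{Y}\ge 0$ the event $\{\sum_i Y_i>n\expect{Y}+n(1-\alpha)\epsilon/2\}$ is contained in $\{\sum_i Y_i>n(1-\alpha)\epsilon/2\}$; Chernoff applied to the latter then reproduces exactly the first term of~\eqref{eq:cvar-conc-subgauss-general}.

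For the two middle pieces, I plan to use the Lipschitz identity $|(x-a)^+-(x-b)^+|\le|a-b|$ together with a case split on $\hat v_{n,\alpha}\ge v_\alpha$ versus $\hat v_{n,\alpha}<v_\alpha$. After substituting the order-statistic identity $\#\{i:X_i>\hat v_{n,\alpha}\}=n-\lceil n\alpha\rceil$, the combined residual rewrites as a first-order contribution of order $|\hat v_{n,\alpha}-v_\alpha|/[n(1-\alpha)]$ plus a bilinear contribution of order $|\hat v_{n,\alpha}-v_\alpha|\cdot\bigl|\tfrac{1}{n}\#\{X_i>v_\alpha\}-(1-\alpha)\bigr|/(1-\alpha)$ produced by trading the random count against its mean $n(1-\alpha)$. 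Allocating the remaining $\epsilon/2$ budget across these two, the first-order piece forces $|\hat v_{n,\alpha}-v_\alpha|\le n(1-\alpha)\epsilon/8$ and Proposition~\ref{prop:var-concentraion-bound1} then returns $2\exp(-2n\delta_{\epsilon_1}^2)$, while the bilinear piece splits as the product of two $\sqrt{\epsilon}$-scale events: Proposition~\ref{prop:var-concentraion-bound1} at scale $\sqrt{\epsilon}/4$ gives $2\exp(-2n\delta_{\epsilon_2}^2)$, and Hoeffding's inequality applied to the Bernoulli indicators $\indic{X_i>v_\alpha}$ at scale $(1-\alpha)\sqrt{\epsilon}$ gives $\exp(-2n\epsilon(1-\alpha)^2)$. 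A union bound over these four sub-events assembles the claim.

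The main obstacle will be the bookkeeping in the case split combined with the rounding offset $\lceil n\alpha\rceil-n\alpha\in[0,1)$: these conspire to produce a genuinely bilinear residual in the quantile deviation and the count deviation rather than a purely linear one, and it is this bilinear structure that forces Proposition~\ref{prop:var-concentraion-bound1} to be invoked twice at the two very different scales $n(1-\alpha)\epsilon/8$ and $\sqrt{\epsilon}/4$, alongside the Hoeffding count correction.
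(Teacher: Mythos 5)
Your proposal is correct and follows essentially the same route as the paper: the same decomposition into a centred sum of $(X_i-v_\alpha)^+$ plus a residual driven by $\hat v_{n,\alpha}-v_\alpha$, the same Chernoff bound with $\lambda=(v_\alpha-\mu)/\sigma^2$ for the first term, and the same splitting of the residual into a first-order piece at scale $n(1-\alpha)\epsilon/8$ and a bilinear piece handled at the two $\sqrt{\epsilon}$-scales via Proposition~\ref{prop:var-concentraion-bound1} and the DKW/Hoeffding bound on the indicator count. The only cosmetic difference is that the paper organizes the residual through the empirical CDF (using $|\hat F_n(\hat v_{n,\alpha})-F(v_\alpha)|\le 1/n$) rather than through the order-statistic count, but this yields exactly the same four terms.
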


\begin{proof}
First, we bound the estimate $\hat{c}_{n, \alpha}$. Notice that
\begin{align}
\hat{c}_{n, \alpha} &= \hat{v}_{n,\alpha} +\frac{1}{n(1-\alpha)} \sum_{i=1}^n \left( X_i - \hat{v}_{n, \alpha} \right)^+ \nonumber\\
& = v_\alpha + \frac{1}{n(1-\alpha)} \sum_{i=1}^n \left( X_i - v_\alpha \right)^+ + \left( \hat{v}_{n, \alpha} - v_\alpha \right) + \frac{1}{n(1-\alpha)} \sum_{i=1}^n \left[ \left( X_i - \hat{v}_{n, \alpha} \right)^+ - \left( X_i - v_\alpha \right)^+ \right] \label{eq:asd1} 
\end{align}
The last term on the RHS of \eqref{eq:asd1} can be re-written as follows:
\begin{align}
\frac{1}{n(1-\alpha)} \sum_{i=1}^n \left[ \left( X_i - \hat{v}_{n, \alpha} \right)^+ - \left( X_i - v_\alpha \right)^+ \right] & = \frac{1}{n(1-\alpha)} \sum_{i=1}^n \left[ (v_\alpha - \hat{v}_{n, \alpha}) \indic{X_i \geq \hat{v}_{n,\alpha}}\right] \nonumber \\
& \hspace{-1cm} + \frac{1}{n(1-\alpha)} \sum_{i=1}^n (X_i - v_\alpha) \left[ \indic{X_i \geq \hat{v}_{n, \alpha}} - \indic{X_i \geq v_\alpha} \right], \label{eq:asd2} 
\end{align}
and
\begin{align}
\frac{1}{n(1-\alpha)} \sum_{i=1}^n \left[ (v_\alpha - \hat{v}_{n, \alpha}) \indic{X_i \geq \hat{v}_{n,\alpha}}\right] &= \frac{ \hat{v}_{n, \alpha} - v_\alpha}{1-\alpha} \left[ \left[ \hat{F}_n(\hat{v}_{n, \alpha}) - 1 \right] - \frac{1}{n} \sum_{i=1}^n \indic{X_i = \hat{v}_{n, \alpha}} \right] \nonumber\\
&= \frac{ \hat{v}_{n, \alpha} - v_\alpha}{1-\alpha} \left[ \hat{F}_n(\hat{v}_{n, \alpha}) - 1 \right] \textrm{ w.p. } 1.\label{eq:asd3}
\end{align}
The last equality above uses the fact that $\frac{1}{n} \sum_{i=1}^n \indic{X_i = \hat{v}_{n, \alpha}}$ takes values zero w.p. $1$, since $X_i$ is continuous, for each $i$.

Combining~\eqref{eq:asd1}, \eqref{eq:asd2} and \eqref{eq:asd3}, we obtain
\begin{align}
\hat{c}_{n, \alpha} & = v_\alpha + \frac{1}{n} \sum_{i=1}^n \frac{(X_i - v_\alpha)^+}{1-\alpha} + (\hat{v}_{n, \alpha} - v_\alpha) + \frac{\hat{v}_{n, \alpha} -v}{1-\alpha} \left[ \hat{F}_n(\hat{v}_{n, \alpha}) - 1\right] \nonumber  \\ 
& \qquad \qquad \qquad + \frac{1}{n} \sum_{i=1}^n \frac{X_i - v_\alpha}{1-\alpha} \left[ \indic{X_i \geq \hat{v}_{n, \alpha} } - \indic{X_i \geq v_\alpha } \right] \nonumber  \\
& = v_\alpha + \frac{1}{n(1-\alpha)} \sum_{i=1}^n (X_i - v_\alpha)^+ + B_n, \label{eq:new12}
\end{align}
where
\begin{equation*} 
B_n \overset{\Delta}{=} \frac{\hat{v}_{n, \alpha} - v_\alpha}{1-\alpha} \left[ \hat{F}_n(\hat{v}_{n, \alpha}) - \alpha\right] + \frac{1}{n} \sum_{i=1}^n \frac{X_i - v_\alpha}{1-\alpha} \left[ \indic{X_i \geq \hat{v}_{n, \alpha} } - \indic{X_i \geq v_\alpha } \right].
\end{equation*}
From \eqref{eq:new12}, we have
\begin{align}
\hat{c}_{n, \alpha} - c_\alpha &= \left[ v_\alpha +\frac{1}{n} \sum_{i=1}^n \frac{\left( X_i - v_\alpha \right)^+}{1-\alpha} - c_\alpha \right] + B_n \nonumber \\
&= \frac{1}{1-\alpha} \Big[ \frac{1}{n} \sum_{i=1}^n \left( X_i - v_\alpha \right) \indic{X_i \geq v_\alpha} - \expect{(X-v_\alpha) \indic{X \geq v_\alpha}} \Big] + B_n. \label{eq:c-n-a-n-b-n}
\end{align}
For notational convenience, let 
\begin{align*}
Y_i &= \left( X_i - v_\alpha \right) \indic{X_i \geq v_\alpha},\\
Y &= (X-v_\alpha) \indic{X \geq v_\alpha} \textrm{ and }\\
A_n &= \frac{1}{1-\alpha} \left( \frac{1}{n} \sum_{i=1}^n Y_i - \expect{Y} \right).
\end{align*} 
It is easy to see that $Y_i$'s are i.i.d., non-negative, and $\expect{Y_i} = (1-\alpha)(c_\alpha - v_\alpha)$, $\forall i$. 
We now proceed to bound $\prob{ \hat{c}_{n, \alpha} - c_\alpha > \epsilon}$,  using \eqref{eq:c-n-a-n-b-n}, as follows:
\begin{align}
\prob { \hat{c}_{n, \alpha} - c_\alpha > \epsilon} &= \prob{ A_n + B_n  > \epsilon} \leq \prob{  A_n  > \epsilon/2} + \prob{ B_n  > \epsilon/2}. \label{eq:simple1}
\end{align}
For handling $\prob{ A_n > \epsilon/2}$, we bound the moment generating function of r.v. $Y_i$ as follows: 
\begin{align}
\expect{e^{\lambda Y_i/n}} = \expect{e^{\frac{\lambda}{n}\left( X_i - v_\alpha \right) \indic{X_i \geq v_\alpha} }} & = \int_{-\infty}^{v_\alpha} f_X(x) dx + \int_{v_\alpha}^\infty e^{\frac{\lambda}{n}(x - v_\alpha)} f_X(x) dx \nonumber \\
& \leq F_X(v_\alpha) + e^{-\frac{\lambda}{n} v_\alpha}\int_{-\infty}^\infty e^{\frac{\lambda}{n}x} f_X(x) dx \nonumber \\
& \overset{(a)}{\leq} \alpha + e^{-\frac{\lambda}{n} v_\alpha + \frac{\lambda}{n} \mu + \frac{\lambda^2 \sigma^2}{2n^2}}, \label{eq:mgf-bund-for-yi}	
\end{align}
where $(a)$ is due to the sub-Gaussianity of $X_i$. Thus,
\begin{align}		
\prob{A_n > \frac{\epsilon}{2} } = \prob{\frac{1}{n} \sum_{i=1}^n Y_i > \frac{(1-\alpha)\epsilon}{2} + \expect{Y}} & \overset{(b)}{\leq} \frac{\Pi_{i=1}^n \expect{e^{\lambda Y_i/n}}}{e^{ \lambda \left( (1-\alpha)\epsilon/2 + \expect{Y} \right) }} \overset{(c)}{\leq} \frac{\left[ \alpha + e^{-\frac{\lambda}{n} v_\alpha + \frac{\lambda}{n} \mu + \frac{\lambda^2 \sigma^2}{2n^2}} \right]^n}{e^{ \lambda \left( (1-\alpha)\epsilon/2 + \expect{Y} \right) }}, \label{eq:simple2}
\end{align}
where $(b)$ uses Markov's inequality and $(c)$ follows from \eqref{eq:mgf-bund-for-yi}. Notice that \eqref{eq:simple2} holds for any $\lambda > 0$. However, for the bound on the RHS above to be meaningful, we require that $\alpha + e^{-\frac{\lambda}{n} v_\alpha + \frac{\lambda}{n} \mu + \frac{\lambda^2 \sigma^2}{2n^2}} < 1$. Now, maximizing $\frac{\lambda^2 \sigma^2}{2n^2} -\frac{\lambda(v_\alpha - \mu)}{n}.$ over $\lambda$, we obtain $\lambda_* =  \frac{n ( v_\alpha - \mu )}{\sigma^2}$. Substituting the value of $\lambda_*$ in \eqref{eq:simple2},  we obtain 
\begin{align}
\prob {A_n > \epsilon/2} \leq \exp \left(- \frac{n(v_\alpha - \mu)}{\sigma^2} \left[  \frac{(1-\alpha) \epsilon}{2} + \expect{Y} \right] \right) \left[ \alpha + \exp \left(  - \frac{( v_\alpha - \mu)^2}{2\sigma^2} \right) \right]^n. \label{eq:upperbound-on-An}
\end{align}

For handling the $\prob{B_n > \frac{\epsilon}{2}}$ term in \eqref{eq:simple1}, we bound $\vert B_n \vert$ as follows. Using the inequality above, we bound $\vert B_n \vert$ as follows:
\begin{align*}
\vert B_n \vert & \leq \frac{\vert v_\alpha - \hat{v}_{n, \alpha} \vert}{1-\alpha} \vert \alpha - \hat{F}_n (\hat{v}_{n, \alpha}) \vert + \frac{\vert v_\alpha - \hat{v}_{n, \alpha} \vert}{1-\alpha} \vert \hat{F}_n(v_\alpha) - \hat{F}_n (\hat{v}_{n, \alpha}) \vert \\
& = \frac{\vert v_\alpha - \hat{v}_{n, \alpha} \vert}{1-\alpha} \Big[ \vert F(v_\alpha) - \hat{F}_n (\hat{v}_{n, \alpha}) \vert + \vert \hat{F}_n(v_\alpha) - F(v_\alpha) -\hat{F}_n (\hat{v}_{n, \alpha}) + F(v_\alpha) \vert \Big] \\
& \leq \frac{\vert v_\alpha - \hat{v}_{n, \alpha} \vert}{1-\alpha} \Big[ 2 \vert \hat{F}_n (\hat{v}_{n, \alpha}) - F(v_\alpha) \vert + \vert \hat{F}_n(v_\alpha) - F(v_\alpha) \vert \Big].\stepcounter{equation}\tag{\theequation}\label{eq:notsosimple}
\end{align*}
The first inequality above uses the following fact:
\begin{align*}
\Big| \frac{1}{n} \sum_{i=1}^n \frac{X_i - v_\alpha}{1-\alpha} \left[ \indic{X_i \geq \hat{v}_{n, \alpha} } - \indic{X_i \geq v_\alpha } \right] \Big| \leq \frac{1}{1-\alpha} \vert v_\alpha - \hat{v}_{n, \alpha} \vert \vert \hat{F}_n(v_\alpha) - \hat{F}_n(\hat{v}_{n, \alpha}) \vert.
\end{align*}
Using \eqref{eq:notsosimple}, we have
\begin{align*}
\prob{B_n > \frac{\epsilon}{2}} &\leq \prob{\vert B_n \vert > \frac{\epsilon}{2}} \leq \mathbb{P} \Big[ \frac{\vert \hat{v}_{n,\alpha} - v_\alpha \vert}{1-\alpha}  \Big( 2\vert \hat{F}_{n,\alpha} (\hat{v}_{n,\alpha}) - F(v_\alpha) \vert + \vert \hat{F}_n (v_\alpha) - F(v_\alpha) \vert \Big) > \frac{\epsilon}{2} \Big]
\end{align*}
It is easy to see that $\vert \hat{F}_{n,\alpha} (\hat{v}_{n,\alpha}) - F(v_\alpha) \vert \leq \frac{1}{n}.$ Hence,
\begin{align*}
\prob{B_n > \frac{\epsilon}{2}} & \leq \prob{\vert B_n \vert > \frac{\epsilon}{2}}  \leq \mathbb{P} \Big[ \frac{\vert \hat{v}_{n, \alpha} - v_\alpha \vert}{1 - \alpha} \Big( \frac{2}{n} + \vert \hat{F}_n (v_\alpha) - F(v_\alpha) \vert \Big) > \frac{\epsilon}{2} \Big] \\
& \leq \prob{ \frac{2}{n} \frac{1}{(1-\alpha)} \vert \hat{v}_{n,\alpha} - v_\alpha \vert > \frac{\epsilon}{4} } + \prob{\frac{1}{1-\alpha} \vert \hat{v}_{n,\alpha} - v_\alpha \vert \vert \hat{F}_n (v_\alpha) - F(v_\alpha) \vert > \frac{\epsilon}{4}}.
\end{align*}
Let $D_n = \lbrace \vert \hat{v}_{n,\alpha} - v_\alpha \vert \leq \frac{\sqrt{\epsilon}}{4} \rbrace$. Then, we have 
\begin{align*}
\prob{\frac{\vert \hat{v}_{n,\alpha} - v_\alpha \vert \vert \hat{F}_n (v_\alpha) - F(v_\alpha) \vert}{1-\alpha} > \frac{\epsilon}{4}} & \leq \prob{ \frac{\vert \hat{v}_{n,\alpha} - v_\alpha \vert \vert \hat{F}_n (v_\alpha) - F(v_\alpha) \vert}{1-\alpha} > \frac{\epsilon}{4}, D_n} + \prob{D_n^c} \\
& \leq \prob{\vert \hat{F}_n (v_\alpha) - F(v_\alpha) \vert > \sqrt{\epsilon}(1-\alpha)} + \prob{D_n^c} \\
& \overset{(d)}{\leq} \exp \left( -2n \epsilon (1 - \alpha)^2 \right) + \prob{D_n^c},
\end{align*}
where $(d)$ is due to the DKW inequality. Therefore,
\begin{align}
\prob{B_n > \epsilon/2} \leq \prob{\vert \hat{v}_{n,\alpha} - v_\alpha \vert > n (1-\alpha) \epsilon/8} + \prob{\vert \hat{v}_{n,\alpha} - v_\alpha \vert > \sqrt{\epsilon}/4} + \exp \left( -2n \epsilon (1-\alpha)^2 \right). \label{eq:upper-bound-on-Bn}
\end{align}
Using \eqref{eq:simple1}, \eqref{eq:upperbound-on-An} and \eqref{eq:upper-bound-on-Bn}, we obtain
\begin{align*}
\prob {\hat{c}_{n, \alpha} - c_\alpha > \epsilon} & \leq \exp \left(-\frac{n(v_\alpha - \mu)}{\sigma^2} \left[  \frac{(1-\alpha) \epsilon}{2} + \expect{Y} \right] \right) \left[ \alpha + \exp \left(  - \frac{( v_\alpha - \mu)^2}{2\sigma^2} \right) \right]^n \nonumber \\
& + \prob{\vert \hat{v}_{n,\alpha} - v_\alpha \vert > \frac{n (1-\alpha) \epsilon}{8}} + \prob{\vert \hat{v}_{n,\alpha} - v_\alpha \vert \geq \frac{\sqrt{\epsilon}}{4}} + \exp \left( -2n \epsilon (1-\alpha)^2\right) \\
& \overset{(a)}{\leq}  \exp \left(- \frac{n \epsilon (1-\alpha) (v_\alpha - \mu)}{2\sigma^2}  \right) \left[ \alpha + \exp \left(  - \frac{( v_\alpha - \mu)^2}{2\sigma^2} \right) \right]^n +  2\exp \left( -2n \delta_{\epsilon_1}^2) \right) \nonumber \\
&  + 2\exp \left( -2n \delta_{\epsilon_2}^2) \right) + \exp \left( -2n \epsilon (1-\alpha)^2 \right),
\end{align*}
where $\delta_{\epsilon_1},\delta_{\epsilon_2}$ are as defined in the statement of the proposition, and $(a)$ is due to~Proposition~\ref{prop:var-concentraion-bound1} and the fact that $\expect{Y} \geq 0$.
\end{proof}

\paragraph{Proof of Proposition \ref{prop:cvar-subgauss}}
\begin{proof}
For $\sigma < \sqrt{\frac{\left( v_\alpha - \mu \right)^2}{2\ln \left( 1/(1-\alpha) \right)} },$ 
we note that 
$\alpha + \exp \left(  - \frac{( v_\alpha - \mu)^2}{2\sigma^2} \right) < 1.$
Given that the density exists, we have
\[ F\left(v_\alpha + \eta_1\right) - F\left(v_\alpha - \eta_2\right) = f(\bar v)(\eta_1+\eta_2),\]
for some $\bar v \in \left[v_\alpha - \eta_2, v_\alpha + \eta_1\right]$. 
Using the identity above for the two expressions inside $\delta_{\epsilon_1}$, we obtain
\[\delta_{\epsilon_1} = \min\left(f(\bar v_1), f(\bar v_2)\right)\times \frac{n(1-\alpha) \epsilon}{8}.\]
for some $\bar v_1 \in \left[v_\alpha, v_\alpha  + \frac{n(1-\alpha) \epsilon}{8}\right]$ and $\bar v_2 \in \left[v_\alpha - \frac{n(1-\alpha) \epsilon}{8}, v_\alpha\right]$. 
Along similar lines, it is easy to infer that 
\[\delta_{\epsilon_2} = \min\left(f(\bar v_3), f(\bar v_4)\right)\times\frac{\sqrt\epsilon}{4},\]
for some $\bar v_1 \in \left[v_\alpha, v_\alpha  + \frac{\sqrt\epsilon}{4}\right]$ and $\bar v_2 \in \left[v_\alpha - \frac{\sqrt\epsilon}{4},v_\alpha\right]$.
The claim follows.
\end{proof} 

\subsection{Proof of Proposition~\ref{prop:cvar-subexp}}
\label{sec:proof-cvar-subexp}
\begin{proof}
Observe that, in the proof of Proposition~\ref{prop:cvar-concentration-for-sub-gaussian},  sub-Gaussianity is used is bounding   $\prob{A_n \geq \epsilon/2}$ following \eqref{eq:simple1} there. Here, we bound the same using sub-exponential assumption.     

Recall that $A_n = \frac{1}{1-\alpha} \left( \frac{1}{n} \sum_{i=1}^n Y_i - \expect{Y} \right),$ where $Y_i = \left( X_i - v_\alpha \right) \indic{X_i \geq v_\alpha}$ and $Y = (X-v_\alpha) \indic{X \geq v_\alpha}.$  Starting as in the derivation of~\eqref{eq:mgf-bund-for-yi},
\begin{align}
\expect{e^{\lambda Y_i}} = \expect{e^{ \lambda\left( X_i - v_\alpha \right) \indic{X_i \geq v_\alpha} }} & \leq F_X(v_\alpha) + e^{-\lambda v_\alpha}\int_{-\infty}^\infty e^{\lambda x} f_X(x) dx \nonumber \\
& \leq \alpha + e^{-\lambda v_\alpha + \lambda \mu +  \frac{\lambda^2 \sigma^2}{2}}, \,\,\,\,\, \forall | \lambda | < \frac{1}{b}, \label{eq:mgf1-bund-for-yi}	
\end{align}
where the last inequality uses the fact that $X_i$ is sub-exponential, for each $i$. Thus,
\begin{align}		
\prob{A_n > \frac{\epsilon}{2} } = \prob{ \sum_{i=1}^n Y_i > n \left( \frac{(1-\alpha)\epsilon}{2} + \expect{Y} \right)} & \overset{(b)}{\leq} \frac{\Pi_{i=1}^n \expect{e^{\lambda Y_i}}}{e^{ \lambda \left( n (1-\alpha)\epsilon/2 + \expect{Y} \right) }}  \quad \forall \lambda > 0  \nonumber\\
& \overset{(c)}{\leq} \frac{\left[ \alpha + e^{-\lambda v_\alpha + \lambda \mu + \frac{\lambda^2 \sigma^2}{2}} \right]^n}{e^{ n \lambda \left( (1-\alpha)\epsilon/2 + \expect{Y} \right) }}, \quad \forall \,  0 < \lambda < \frac{1}{b}, \label{eq:ssimple2}
\end{align}
where $(b)$ follows from Markov's inequality and $(c)$ uses \eqref{eq:mgf1-bund-for-yi}. As argued in the proof of Proposition~\ref{prop:cvar-concentration-for-sub-gaussian}, choosing $\lambda = \frac{v_\alpha - \mu}{\sigma^2}$ maximizes the term in the numerator on the RHS above, while ensuring the same is less than $1$. However, sub-exponential assumption requires that $0 < \lambda < 1/b.$ Hence, we choose $\lambda = \left\{m_b = \min\left(\frac{v_\alpha - \mu}{\sigma^2}, b'\right)\right\},$ where $b' <  1/b.$ Using the above and the fact that $\expect{Y} \geq 0,$ we obtain that
\begin{align}
\prob {A_n > \epsilon/2}  \leq \exp \left(- \frac{n \epsilon   (1-\alpha) m_b }{2} \right)  \left[ \alpha + \exp \left( m_b (\mu - v_\alpha) + \frac{m_b^2\sigma^2}{2} \right) \right]^n. \label{eq:upperbound1-on-An}
\end{align}
For $\sigma < \sqrt{\frac{2 \ln (1-\alpha) + 2 (v_\alpha - \mu)m_b}{ m_b^2}},$ we obtain
\begin{align*}
\prob {A_n > \epsilon/2}  \leq \exp \left(- \frac{n \epsilon (1-\alpha) m_b}{2} \right)
\end{align*}
The rest of the proof follows in a similar manner as that of Proposition~\ref{prop:cvar-subgauss}.
\end{proof} 

\begin{remark}
The proof technique used to establish Proposition \ref{prop:cvar-subgauss} cannot be employed to establish a lower deviations bound for the CVaR estimator in \eqref{eq:cvar-est}. 
In particular, using the notation from the proof of Proposition \ref{prop:cvar-subgauss}, we have
\begin{align*}
\prob{\hat{c}_{n, \alpha} - c_\alpha \leq -\epsilon} = \prob{A_n + B_n \leq -\epsilon} & \leq \prob{A_n \leq -\epsilon/2} + \prob{B_n \leq -\epsilon/2} \\
& \leq \prob{A_n \leq -\epsilon/2} + \prob{\vert B_n \vert \geq \epsilon/2}
\end{align*}
While $\prob{\vert B_n \vert \leq -\epsilon/2}$ can be bounded as before (see~\eqref{eq:upper-bound-on-Bn}), handling  $\prob{A_n \leq -\epsilon/2}$ is challenging. For instance, mimicking the steps leading to \eqref{eq:simple1}, we have
\begin{align*}
\prob{A_n \leq -\epsilon/2} &= \prob{\frac{1}{n(1-\alpha)} \sum_{i=1}^n \left( Y_i - \expect{Y} \right) \leq -\epsilon/2} \\
& = \prob{\frac{1}{n} \sum_{i=1}^n  Y_i \leq \expect{Y} -(1 - \alpha) \epsilon/2} \\
& \overset{(a)}{=} \prob{ \exp \left(- \frac{\lambda}{n} \sum_{i=1}^n  Y_i \right)\right. \left. \geq \exp \left( -\lambda \left( \expect{Y} - \frac{(1 - \alpha) \epsilon}{2} \right) \right)}  \\
& \overset{(b)}{\leq} \frac{\Pi_{i=1}^n \expect{e^{\frac{-\lambda Y_i}{n}  }}}{e^{ -\lambda \left( \expect{Y} -(1 - \alpha)\epsilon/2  \right)}}\\
&  \overset{(c)}{\leq} \frac{\left[ \alpha + e^{\frac{\lambda}{n} v_\alpha - \frac{\lambda}{n} \mu + \frac{\lambda^2 \sigma^2}{2n^2}} \right]^n}{\exp\left( -\lambda \left( -(1-\alpha)\epsilon/2 + \expect{Y} \right) \right)},\stepcounter{equation}\tag{\theequation}\label{eq:sdt}
\end{align*}
where $(a)$ holds for any $\lambda > 0,$ $(b)$ uses Markov's inequality and $(c)$ is due to~\eqref{eq:mgf-bund-for-yi}. Deriving a meaningful upper bound on $\prob{A_n \leq -\epsilon/2}$ using \eqref{eq:sdt} is difficult, since the numerator there cannot be controlled. This is because $(v_\alpha-\mu)>0$ and $\lambda$ is constrained to be positive. 
\end{remark}



\section{Conclusions}
\label{sec:conclusions}
We derived a one-sided concentration bound for a natural sample-based CVaR estimator, when the underlying distribution is unbounded, albeit sub-Gaussian or sub-exponential. We also derived concentration bounds for a quantile-based estimator for VaR, and this may be of independent interest. We believe our concentration bounds for natural estimates VaR and CVaR are interesting not only from a statistical viewpoint, but also for solving sequential decision making problems under uncertainty, for e.g., in the multi-armed bandit framework \cite{bubeck2012regret}. 

As future work, it would be interesting to derive a concentration result that bounds the lower deviations of the CVaR estimator. An orthogonal direction is to derive concentration result for CVaR estimators that incorporate importance sampling. 


\bibliographystyle{plainnat}
\bibliography{cvar_refs}

\end{document}